\newcommand{\abs}[1]{\left\vert #1 \right\vert}
\newtheorem{theorem}{Theorem}
\newtheorem{remark}{Remark}
\begin{document}
\title{Brain MRI Segmentation with Fast and Globally Convex Multiphase Active Contours}
\author{Juan C. Moreno\footnote{Corresponding author. IT, Department of Computer Science, University of Beira Interior, 6201--001, Portugal
E-mail: jmoreno@ubi.pt},
\and
V. B. S. Prasath\footnote{Department of Computer Science,
University of Missouri-Columbia, MO 65211 USA. E-mail: prasaths@missouri.edu}, 
\and 
Hugo Proen\c{c}a\footnote{IT, Department of Computer Science, University of Beira Interior, 6201--001, Portugal. E-mail: hugomcp@ubi.pt}, 
\and 
K. Palaniappan\footnote{Department of Computer Science,
University of Missouri-Columbia, MO 65211 USA.
E-mail: palaniappank@missouri.edu}}

\date{}
\maketitle

%%%---------------------------------------------------------------------------------------

%-------------------------------------------------------------------------%-------------------------------------------------------------------------  
%-------------------------------------------------------------------------%-------------------------------------------------------------------------  
\begin{abstract}

Multiphase active contour based models are useful in identifying multiple regions with different characteristics such as the mean values of regions. This is relevant in brain magnetic resonance images (MRIs), allowing the differentiation of white matter against gray matter. We consider a well defined globally convex formulation of Vese and Chan multiphase active contour model for segmenting brain MRI images. A well-established theory and an efficient dual minimization scheme are thoroughly described which guarantees optimal solutions and provides stable segmentations. Moreover, under the dual minimization implementation our model perfectly describes disjoint regions by avoiding local minima solutions. Experimental results indicate that the proposed approach provides better accuracy than other related multiphase active contour algorithms even under severe noise, intensity inhomogeneities, and partial volume effects.

\end{abstract}
\textbf{Keywords}: Image segmentation, active contours, multiphase, globally convex, dual formulation, brain MRI.

%% main text%%%%%%------------------------------------------------------------------------------
%%%%%%------------------------------------------------------------------------------------------------

\section{Introduction}\label{sec:intro}

The aim of image segmentation is to obtain meaningful partitions of an input image into a finite number of disjoint homogeneous objects. Active contour models are popular in the regard. Chan and Vese~\cite{CV01} proposed an active contour without edges scheme based on the classical work of Mumford and Shah~\cite{MS89} variational energy minimization model. Since biomedical images typically have multiple regions of interest with different characteristics, deriving a multiphase active contour scheme for efficient segmentation is an important area of research in image processing~\cite{CS00,VC02,KeeganSC12}.

In MRI (magnetic resonance image) images, segmentations based on active contours have been used with traditional level set method~\cite{OS88}.  Active contours can also be improved using region information~\cite{DrapacaMultiphaseBrainMRI05}, salient features~\cite{KohISBI11} or mathematical morphology~\cite{GuiISBI11} etc. Traditionally these
schemes use a gradient descent formulation to implement the non-convex energy minimization and can stuck in undesired local minima thereby lead to erroneous segmentations. Moreover, traditional level set based implementation is prone to slower convergence due to the well-known re-initialization requirement and discretization errors. More recently quite a lot of interest is being shown in techniques that can obtain a general convex formulation for active contours schemes based on energy minimization which can alleviate the problem of local minima at the same time focussing on the computational complexity~\cite{DoganMorinMSshape08,BCB10,BaeYTai11,BrownCB12,JCthesis12, KangMarch13}. Among other techniques for MRI image segmentation, we mention fuzzy C-means based models~\cite{ahmed2002modified,liew2003adaptive,chuang2006fuzzy}, fuzzy connectedness~\cite{Zhuge20091095}, automatic labeling~\cite{fischl2002whole}, adaptive expectation-maximization (EM)~\cite{wells1996adaptive}, Bayesian EM~\cite{marroquin2002accurate}, hidden Markov model EM~\cite{zhang2001segmentation}, kernel clustering~\cite{liao2008mri}, optimum-path clustering~\cite{Cappabianco20121047}, anisotropic diffusion combined with classical snakes model~\cite{atkins1998fully}, discriminant analysis~\cite{amato2003segmentation}, and neural networks~\cite{shen2005mri}. We also refer to~\cite{pal1993review,pham2000current,MT10} for reviews about segmentation for medical images in general and~\cite{bezdek1993review} for MR images in particular.  The area of MRI image segmentation has seen tremendous research activity and a more detailed review in this particular field can be found in~\cite{balafar2010review}.

%%%%%%%%%% Figure 1. Teaser image %%%%%%%%%%%%%%%%%%%%%%%%%%%%%%%
\begin{figure}
\centering
\subfigure[]{\includegraphics[width=2cm,height=2cm]{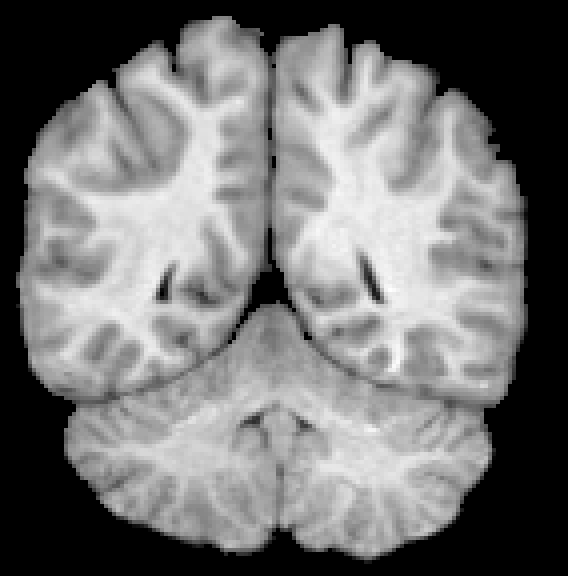}}
\subfigure[]{\includegraphics[width=2cm,height=2cm]{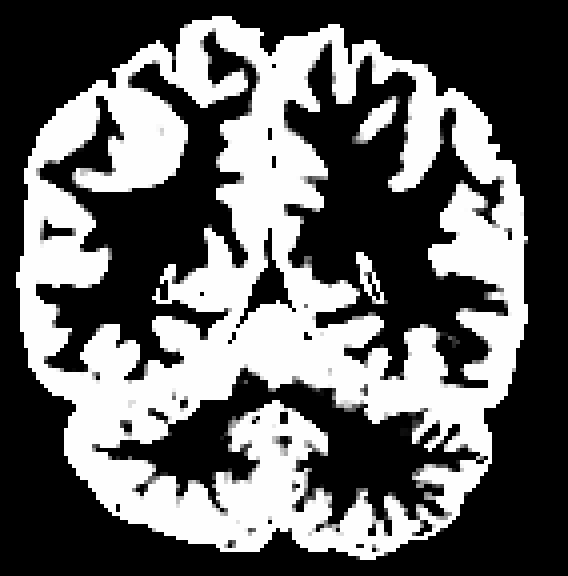}}
\subfigure[]{\includegraphics[width=2cm,height=2cm]{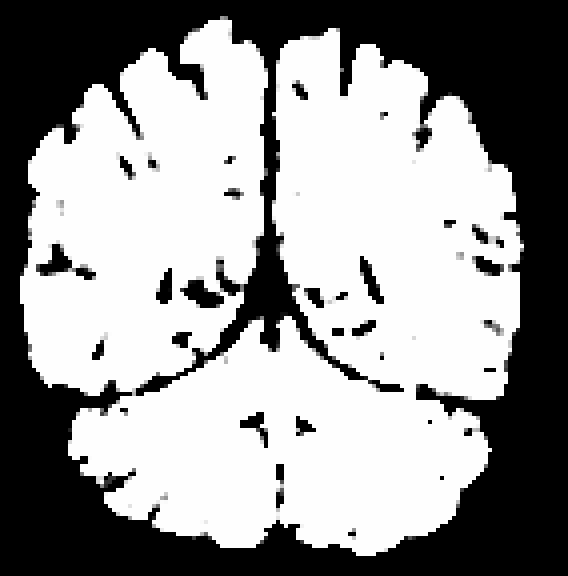}}
\subfigure[]{\includegraphics[width=2cm,height=2cm]{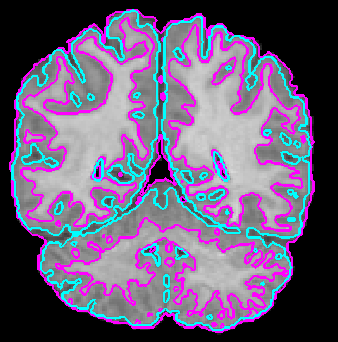}}
\subfigure[]{\includegraphics[width=2cm,height=2cm]{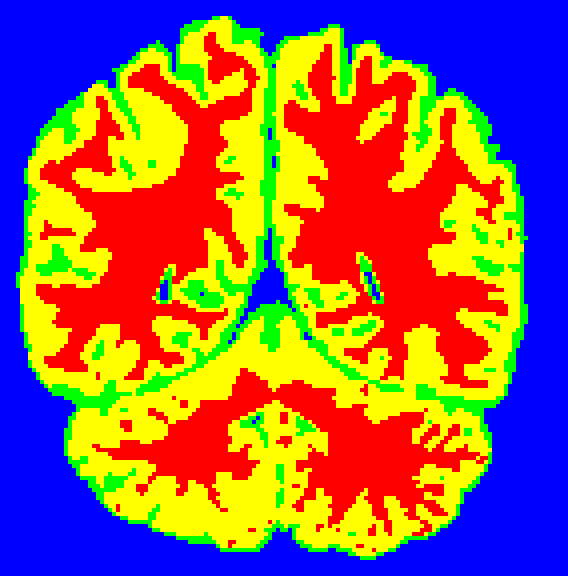}}\\
\caption{\footnotesize{Our fast and automatic four phase image segmentation scheme provides a better segmentations for brain MRI images, it differentiates the gray matter from the surrounding white region clearly.
            (a) Input image with noise level $n=5\%$,
            (b) \& (c) show final binary segmentations obtained by thresholding the relaxed functions $u_1,u_2$ at $0.5$,
            (d) final segmentation result showing the contours superimposed on the input image,
            (e) color coded visualization of the obtained segmentation result.}}\label{fig:teaser}
\end{figure}
%%%%%%%%%%%%%%%%%%%%%%%%%%%%%%%%%%%%%%%%%%%%%%%%%%%%%%%%%%%%%%%%%%%%%

In this paper, we consider a globally convex version of the four phase piecewise constant energy functional following the seminal work of Chan et al~\cite{CE06}. By deriving an approximate novel convex functional we change the original formulation into a binary segmentation problem and utilize a dual minimization to solve the relaxed formulation~\cite{Ch04}. The proposed global methodology avoids the level set re-initialization constraint and other ad-hoc techniques~\cite{MerinoISBI10} used for fixing level set active contour movements throughout the iterations. The proposed approach is used to obtain white matter and gray matter partitions on brain MRI images as can be seen for example in Figure~\ref{fig:teaser}. Our scheme does not involve level sets or re-initialization and instead relies on the relaxed globally convex formulation of the Vese and Chan multiphase active contours. Comparison results on the different image sets with varying noise and inhomogeneities show that we can obtain better results than traditional level set multiphase schemes~\cite{VC02,AyedMitiche06,AyedMiticheBelhadj06,AyedMitiche08,BenSalahMiticheTIP10} and primal-dual approach of~\cite{ChambollePock11}. Moreover, compared to these traditional level set based implementations we achieve faster convergence due to the usage of efficient alternating dual minimization. The proposed approach is general in the sense that we can add domain specific knowledge to improve such active contour schemes further for various tasks~\cite{LK08,Zhang2011256,FigueiredoJuan12,FigueiredoJ12,PrasathBunyak12,PrasathPalG12}.

The main contribution of our work is two-fold: 1) a fast four phase active contour model using a relaxed globally convex minimization approximation; 2) using an efficient dual minimization based implementation for performing segmentation on MRI images. The rest of the paper is organized as follows. Section~\ref{sec:multi} introduces the multiphase variational active contour scheme and provides a globally convex formulation. Section~\ref{sec:exper} illustrates the segmentation results on various Brain MRI images including comparison of different schemes. Finally, Section~\ref{sec:concl} concludes the paper.
%%%---------------------------------------------------------------------------------------
%%%---------------------------------------------------------------------------------------
\section{Multiphase active contours model}\label{sec:multi}

We first recall the multiphase formulation of Vese and Chan~\cite{VC02}
and restrict ourselves to the piecewise constant four phase model since the general case can be derived similarly. Let
$\phi_2,\phi_2:\Omega\subset\mathbb{R}^2\to \mathbb{R}$ be the two
level sets. $H_1 = H(\phi_{1})$, $H_2 = H(\phi_{2})$ and $\tilde
H_1 = 1-H(\phi_{1})$, $\tilde H_2=1-H(\phi_{2})$, where $H$ is the
Heaviside function, representing four regions. Our goal is to solve a minimization problem
\begin{gather}\label{E:origmin}
\min_{(\mathbf{c},\Phi)}F(\mathbf{c},\Phi)
\end{gather}
with
\begin{equation*}
\begin{aligned}
F(\mathbf{c},\Phi)&=\mu_{1}\int_{\Omega}\delta(\phi_{1})|\nabla\phi_{1}|\,d\mathbf{x}
+\mu_{2}\int_{\Omega}\delta(\phi_{2})|\nabla\phi_{2}|\,d\mathbf{x}\\
&+\lambda_{11}\int_{\Omega}(I-c_{11})^{2}
H_1\,H_2\,d\mathbf{x}
+\lambda_{10}\int_{\Omega}(I-c_{10})^{2}H_1\,\tilde H_2\,d\mathbf{x}\\
&+\lambda_{01}\int_{\Omega}(I-c_{01})^{2}\tilde
H_1\,H_2\,d\mathbf{x}
+\lambda_{00}\int_{\Omega}(I-c_{00})^{2}\tilde H_1\,\tilde
H_2\,d\mathbf{x}
\end{aligned}
\end{equation*}
where $\Phi=(\phi_{1},\phi_{2})$, and the constant mean values
$\mathbf{c}=(c_{11}, c_{10}, c_{01}, c_{00})$ can be derived as
\[c_{11}=\frac{\int_{\Omega} I\,H_1\,H_2\,d\mathbf{x}}{\int_{\Omega}H_1\,H_2\,d\mathbf{x}},\quad
c_{10}=\frac{\int_{\Omega}I\,H_1\,\tilde
H_2\,d\mathbf{x}}{\int_{\Omega}H_1\,\tilde H_2\,d\mathbf{x}}\]
\[c_{01}=\frac{\int_{\Omega} I\,\tilde H_1\,H_2\,d\mathbf{x}}{\int_{\Omega}\tilde H_1\,H_2\,d\mathbf{x}},\quad
c_{00}=\frac{\int_{\Omega} I\,\tilde H_1\,\tilde
H_2\,d\mathbf{x}}{\int_{\Omega}\tilde H_1\,\tilde
H_2\,d\mathbf{x}}\] 
Note the the zero level sets $\phi_i = 0$, $i=1,2$, represent object boundaries and the mean values $\mathbf{c}$ represent the expected average pixel values in these objects. Vese and Chan~\cite{VC02} used the corresponding gradient descent equations to implement the active contours~\cite{OS88}. In the numerical implementation of the above PDEs, a non-compactly supported, smooth approximation of the Heaviside function $H_{\epsilon}(\mathbf{x})$, such that $H_{\epsilon}(\mathbf{x}) \rightarrow H(\mathbf{x})$ as $\epsilon\rightarrow 0$ is utilized. Since the above minimization~\eqref{E:origmin} is non-convex the time discretized gradient descent PDEs usually require large iterations and small time steps to convergence (typically in $100$'s of iterations). Moreover, the final segmentation result may not correspond to the global minimum of the energy function as the gradient descent scheme can be stuck at a local minima of the corresponding energy functional given in Eqn.~\eqref{E:origmin}.

We briefly recall the corresponding gradient descent equations (time dependent Euler-Lagrange equations of Eqn.~\eqref{E:origmin}) for the level sets functions $\phi_1$ and $\phi_2$,
\begin{equation}\label{EL1}
\phi_{1t}=\delta(\phi_{1})\left(\mu_{1}\,div\left(\frac{\nabla\phi_{1}}{|\nabla\phi_{1}|}\right)-r_{1}(\mathbf{c},H_{2})\right)
\end{equation}
and 
\begin{equation}\label{EL2}
\phi_{2t}=\delta(\phi_{2})\left(\mu_{2}\,div\left(\frac{\nabla\phi_{2}}{|\nabla\phi_{2}|}\right)-r_{2}(\mathbf{c},H_{1})\right)
\end{equation}
respectively. Here, the image fitting terms are given by,
\begin{align*}
r_{1}(\mathbf{c},H_{2})&=(\lambda_{11}(I-c_{11})^{2}-\lambda_{01}(I-c_{01})^{2})H_{2} \\
&+ (\lambda_{10}(I-c_{10})^{2}-\lambda_{00}(I-c_{00})^{2})\tilde{H}_{2}\\
r_{2}(\mathbf{c},H_{1})&=(\lambda_{11}(I-c_{11})^{2}-\lambda_{10}(I-c_{10})^{2})H_{1} \\
&+ (\lambda_{01}(I-c_{01})^{2}-\lambda_{00}(I-c_{00})^{2})\tilde{H}_{1}.
\end{align*}

Following, Chan et al~\cite{CE06}, we derive a relaxed energy
minimization formulation by dropping the dirac delta function
($\delta(\phi)$ in~\eqref{EL1} and~\eqref{EL2}) to obtain,
\begin{gather}\label{E:gmin}
\min_{(\Phi, \mathbf{c})}\mathcal{F}(\Phi, \mathbf{c})
\end{gather}
with
\begin{equation*}
\mathcal{F}(\mathbf{c},\Phi)=\mu_{1}\int_{\Omega}|\nabla\phi_{1}|\,d\mathbf{x}+\mu_{2}\int_{\Omega}|\nabla\phi_{2}|\,d\mathbf{x}+\int_{\Omega}r_{1}(\mathbf{c},H_{2})\phi_{1}+\int_{\Omega}r_{2}(\mathbf{c},H_{1})\phi_{2}\,d\mathbf{x}\\
\end{equation*}
Then correspondingly we can derive an energy functional which does not depend on regularized Heaviside functions. Thus, we can solve the following globally convex energy minimization problem,
\begin{eqnarray}\label{E:ourmin1}
\min_{\mathbf{u}=(u_{1},u_{2})\in\{0,1\}^{2}}\mathcal{G}(\mathbf{c},\mathbf{u})
\end{eqnarray}
with
\begin{equation*}
\begin{aligned}
\mathcal{G}(\mathbf{c},\mathbf{u})&=
\mu_{1}\int_{\Omega}|\nabla u_{1}|\,d\mathbf{x}
+\mu_{2}\int_{\Omega}|\nabla u_{2}|\,d\mathbf{x}\\
&+\lambda_{11}\int_{\Omega}(I-c_{11})^2 u_1u_2\,d\mathbf{x}+ \lambda_{01}\int_{\Omega}(I-c_{01})^2 (1-u_{1})u_{2}\,d\mathbf{x}\\
&+ \lambda_{10}\int_{\Omega}(I-c_{10})^2 u_1(1-u_2)\,d\mathbf{x}+\lambda_{00}\int_{\Omega}(I-c_{00})^2(1-u_{1})(1-u_2)\,d\mathbf{x},
\end{aligned}
\end{equation*}
where Heaviside functions are replaced by $\mathbf{u}=(u_{1},u_{2})\in\{0,1\}^{2}$ which are known as \textit{binary partitioning functions}. The above modified minimization problem~\eqref{E:ourmin1} can further be relaxed to the set of functions $\mathbf{u}=(u_{1},u_{2})\in[0,1]^{2}$ in order to solve a convex minimization problem. That is, the binary partitioning functions based energy minimization becomes,
\begin{gather}\label{E:ourmin2}
\min_{\mathbf{u}=(u_{1},u_{2})\in
[0,1]^{2}}\mathcal{G}(\mathbf{c},\mathbf{u}).
\end{gather}
The following theorem provides the guarantee of finding a global minimizer for the derived functional~\eqref{E:ourmin1} in terms of the relaxed version in~\eqref{E:ourmin2}. We follow arguments similar to the work of Chan et al~\cite{CE06} and~\cite{JCthesis12} to prove the following result.
%%%%%%%%%%%%% THEOREM 1 %%%%%%%%%%%%%%%%%%%
\begin{theorem}
For any  $c_{11}, c_{10}, c_{01}, c_{00}\in\mathbb{R}$, a global minimizer for
$\mathcal{G}(\mathbf{c}, \cdot)$ in \eqref{E:ourmin1} can be found by carrying
out the convex minimization problem \eqref{E:ourmin2}.
\end{theorem}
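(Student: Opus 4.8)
The plan is to reduce the claimed equivalence to the co-area formula together with a layer-cake (Cavalieri) decomposition of $\mathcal{G}(\mathbf{c},\cdot)$, in the spirit of the Chan--Esedo\u{g}lu--Nikolova thresholding theorem; the only new point relative to the single-phase case is that the bilinear coupling between $u_1$ and $u_2$ does not obstruct the argument, provided one uses a \emph{two-parameter} layer cake.

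First I would record two elementary facts about a function $u\in BV(\Omega)$ with $0\le u\le 1$. Writing $\Sigma^{u}_{t}=\{\mathbf{x}\in\Omega:u(\mathbf{x})>t\}$, the co-area formula gives $\int_{\Omega}|\nabla u|\,d\mathbf{x}=\int_{0}^{1}\!\big(\int_{\Omega}|\nabla\mathbbm{1}_{\Sigma^{u}_{t}}|\big)\,dt$, while the layer-cake identity gives $u(\mathbf{x})=\int_{0}^{1}\mathbbm{1}_{\Sigma^{u}_{t}}(\mathbf{x})\,dt$ and $1-u(\mathbf{x})=\int_{0}^{1}\big(1-\mathbbm{1}_{\Sigma^{u}_{t}}(\mathbf{x})\big)\,dt$ for a.e.\ $\mathbf{x}$. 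Applying the layer-cake identity to $u_{1}$ with parameter $s$ and to $u_{2}$ with parameter $t$ and using Tonelli's theorem (all integrands are nonnegative and bounded, $\Omega$ bounded, $(I-c_{ij})^{2}\in L^{1}$), each of the four products $u_{1}u_{2}$, $(1-u_{1})u_{2}$, $u_{1}(1-u_{2})$, $(1-u_{1})(1-u_{2})$ rewrites as a double integral over $(s,t)\in(0,1)^{2}$ of the corresponding product of $\mathbbm{1}_{\Sigma^{u_{1}}_{s}}$ and $\mathbbm{1}_{\Sigma^{u_{2}}_{t}}$. Combining this with the co-area identities for the two total-variation terms, each trivially padded by $\int_{0}^{1}ds=1$ or $\int_{0}^{1}dt=1$, I obtain the representation
\[
\mathcal{G}(\mathbf{c},\mathbf{u})=\int_{0}^{1}\!\!\int_{0}^{1}\mathcal{G}\big(\mathbf{c},(\mathbbm{1}_{\Sigma^{u_{1}}_{s}},\,\mathbbm{1}_{\Sigma^{u_{2}}_{t}})\big)\,ds\,dt ,
\]
i.e.\ the relaxed energy of $\mathbf{u}$ equals the average over $(s,t)\in(0,1)^{2}$ of the \emph{binary} energies of its threshold pairs.

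Next I would invoke the direct method of the calculus of variations: the fidelity term is bounded and $L^{1}$-continuous on $[0,1]^{2}$ (if $u_{i,n}\to u_{i}$ in $L^{1}$ with $0\le u_{i,n}\le1$ then $u_{1,n}u_{2,n}\to u_{1}u_{2}$ in $L^{1}$, and similarly for the other products), the total variation is $L^{1}$-lower semicontinuous, and $L^{\infty}$-bounded sequences in $BV(\Omega)$ are $L^{1}$-precompact; hence the relaxed problem~\eqref{E:ourmin2} admits a minimizer $\mathbf{u}^{\ast}=(u_{1}^{\ast},u_{2}^{\ast})\in[0,1]^{2}$. Now for every $(s,t)\in(0,1)^{2}$ the threshold pair $\mathbf{v}_{s,t}:=(\mathbbm{1}_{\Sigma^{u_{1}^{\ast}}_{s}},\mathbbm{1}_{\Sigma^{u_{2}^{\ast}}_{t}})$ lies in $\{0,1\}^{2}\subset[0,1]^{2}$, so minimality of $\mathbf{u}^{\ast}$ yields $\mathcal{G}(\mathbf{c},\mathbf{u}^{\ast})\le\mathcal{G}(\mathbf{c},\mathbf{v}_{s,t})$ for all $(s,t)$; the averaging identity above forces the reverse inequality on average, whence $\mathcal{G}(\mathbf{c},\mathbf{v}_{s,t})=\mathcal{G}(\mathbf{c},\mathbf{u}^{\ast})$ for a.e.\ $(s,t)\in(0,1)^{2}$. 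Fixing any such admissible pair of thresholds, $\mathbf{v}_{s,t}$ is binary and satisfies $\mathcal{G}(\mathbf{c},\mathbf{v}_{s,t})=\min_{[0,1]^{2}}\mathcal{G}(\mathbf{c},\cdot)\le\min_{\{0,1\}^{2}}\mathcal{G}(\mathbf{c},\cdot)\le\mathcal{G}(\mathbf{c},\mathbf{v}_{s,t})$, so it is a global minimizer of~\eqref{E:ourmin1}, which proves the theorem. The same computation performed in a single variable with the other frozen gives the component-wise statement underpinning the alternating dual scheme, each coordinate subproblem being genuinely convex on the convex set $[0,1]$.

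I expect the one delicate point to be the passage from the averaging identity to the existence of an \emph{admissible} threshold pair, i.e.\ ruling out that the set $\{(s,t):\mathcal{G}(\mathbf{c},\mathbf{v}_{s,t})=\mathcal{G}(\mathbf{c},\mathbf{u}^{\ast})\}$ is $\mathcal{L}^{2}$-null; this is settled precisely by combining the integral representation with the one-sided bound coming from minimality of $\mathbf{u}^{\ast}$. The secondary subtlety worth stating explicitly is that, unlike in the two-phase setting, the layer cake must be taken in the two parameters $s$ and $t$ simultaneously so that the bilinear term $u_{1}u_{2}$ decomposes correctly via Tonelli. Verifying the co-area and layer-cake identities and the lower-semicontinuity/compactness needed for existence is routine and I would not belabor it.
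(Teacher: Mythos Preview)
Your proof is correct and follows essentially the same route as the paper: both establish the two-parameter averaging identity $\mathcal{G}(\mathbf{c},\mathbf{u})=\int_{0}^{1}\!\int_{0}^{1}\mathcal{G}\big(\mathbf{c},(\mathbbm{1}_{\{u_{1}>\zeta_{1}\}},\mathbbm{1}_{\{u_{2}>\zeta_{2}\}})\big)\,d\zeta_{1}\,d\zeta_{2}$ via the coarea formula for the TV terms and a layer-cake/Tonelli decomposition for the bilinear fitting terms, then conclude that almost every threshold pair of a relaxed minimizer solves the binary problem. Your version is in fact more carefully argued at the final step (you explicitly combine the one-sided bound from minimality of $\mathbf{u}^{\ast}$ with the averaging identity to force equality a.e.), and you fold in the existence argument that the paper defers to its separate Theorem~2.
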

%%%%%%%%%%%%% THEOREM 1 %%%%%%%%%%%%%%%%%%%
\begin{proof}
We use the standard notation for functions of bounded variation \cite{AttouchB06}. Since $\mathbf{u}\in [0,1]^{2}$, it follows from the standard total variation based Coarea Formula,
\begin{eqnarray*}
\int_{\Omega}|\nabla u_{1}|\,d\mathbf{x}=\int_{0}^{1}\int_{0}^{1} Per\left(\{\mathbf{x}\in\Omega\,:\, u_{1}(\mathbf{x})> \zeta_{1}\}; \Omega\right)\,d\zeta_{1}\,d\zeta_{2}
\end{eqnarray*} 
and 
\begin{eqnarray*}
\int_{\Omega}|\nabla u_{2}|\,d\mathbf{x}=\int_{0}^{1}\int_{0}^{1} Per\left(\{\mathbf{x}\in\Omega\,:\, u_{2}(\mathbf{x})> \zeta_{2}\}; \Omega\right)\,d\zeta_{1}\,d\zeta_{2}.
\end{eqnarray*} 
%%%%%%%
For the image fitting term,
\begin{equation*}
\begin{aligned}
\int_{\Omega}(u-c_{11})^{2}u_{1}u_{2}\,d\mathbf{x}&=\int_{\Omega}(u-c_{11})^{2}\prod_{i=1}^2\Bigg(\int_{0}^{1}\mathbbm{1}_{\{\mathbf{u}\in\Omega\,:\, u_{i}>\zeta_{i}\}}\,d\zeta_{i}\Bigg)\,d\mathbf{x}\\
&=\int_{0}^{1}\int_{0}^{1}\int_{\Omega}(u-c_{11})^{2}\mathbbm{1}_{\{\mathbf{x}\in\Omega\,:\, u_{1}>\zeta_{1}\}}\mathbbm{1}_{\{\mathbf{x}\in\Omega\,:\, u_{2}>\zeta_{2}\}}\, d\mathbf{x}\,d\zeta_{1}\,d\zeta_{2}.
\end{aligned}
\end{equation*}
Further similar computations yield,
\begin{equation*}
\begin{aligned}
\int_{\Omega}(u-c_{01})^{2}&(1-u_{1})u_{2}\,d\mathbf{x}=\\
&=\int_{0}^{1}\int_{0}^{1}\int_{\Omega}(u-c_{01})^{2}(1-\mathbbm{1}_{\{\mathbf{x}\in\Omega\,:\, u_{1}>\zeta_{1}\}})\mathbbm{1}_{\{\mathbf{x}\in\Omega\,:\, u_{2}>\zeta_{2}\}}\, d\mathbf{x}\,d\zeta_{1}\,d\zeta_{2},
\end{aligned}
\end{equation*}
\begin{equation*}
\begin{aligned}
\int_{\Omega}(u-c_{10})^{2}&u_{1}(1-u_{2})\,d\mathbf{x}=\\
&=\int_{0}^{1}\int_{0}^{1}\int_{\Omega}(u-c_{10})^{2}\mathbbm{1}_{\{\mathbf{x}\in\Omega\,:\, u_{1}>\zeta_{1}\}}(1-\mathbbm{1}_{\{\mathbf{x}\in\Omega\,:\, u_{2}>\zeta_{2}\}})\, d\mathbf{x}\,d\zeta_{1}\,d\zeta_{2},
\end{aligned}
\end{equation*}
\begin{equation*}
\begin{aligned}
\int_{\Omega}(u-&c_{00})^{2}(1-u_{1})(1-u_{2})\,d\mathbf{x}=\\
&=\int_{0}^{1}\int_{0}^{1}\int_{\Omega}(u-c_{00})^{2}(1-\mathbbm{1}_{\{\mathbf{x}\in\Omega\,:\, u_{1}>\zeta_{1}\}})(1-\mathbbm{1}_{\{\mathbf{x}\in\Omega\,:\, u_{2}>\zeta_{2}\}})\, d\mathbf{x}\,d\zeta_{1}\,d\zeta_{2}.
\end{aligned}
\end{equation*}
 Defining $\mathbbm{1}_{\mathbf{u}}:=(\mathbbm{1}_{\{\mathbf{x}\in\Omega\,:\, u_{1}>\zeta_{1}\}},\mathbbm{1}_{\{\mathbf{x}\in\Omega\,:\, u_{2}>\zeta_{2}\}})$, it follows that 
\begin{equation*}\label{E:relations}
\mathcal{G}(\mathbf{c},\mathbf{u})=\int_{0}^{1}\int_{0}^{1}\mathcal{G}(\mathbf{c},\mathbbm{1}_{\mathbf{u}})\,d\zeta_{1}\,d\zeta_{2}=\int_{0}^{1}\int_{0}^{1}F(\mathbf{c},\mathbf{u}-\mbox{\boldmath$\zeta$})\,d\zeta_{1}\,d\zeta_{2},
\end{equation*}
for {\it a.e.} $\mbox{\boldmath$\zeta$}=(\zeta_{1},\zeta_{2})\in [0,1]^{2}$. Thus, it follows from the above equations that if  $\mathbf{u}$ is a minimizer of the convex relaxed problem~\eqref{E:ourmin2}, then for {\it a.e.} $\mbox{\boldmath$\zeta$}\in [0,1]^{2}$, the function $\mathbf{w}_{1}=\mathbbm{1}_{\mathbf{u}}$ is a minimizer of the problem \eqref{E:ourmin1}. 
\end{proof}
\begin{remark}
Note also that $\mathbf{w}_{2}=\mathbf{u}-\mbox{\boldmath$\zeta$}$ is a solution of the original Vese and Chan minimization problem~\eqref{E:origmin}. This shows that the relaxed convex minimization problem is equivalent to the original Vese and Chan piecewise constant multiphase formulation~\eqref{E:origmin}, we refer to Chan et al~\cite{CE06} for more details.
\end{remark}
The final segmentation is obtained by thresholding the functions $u_1$ and $u_2$ with any number in the interval $(0,1)$ for example at $0.5$, as shown in Figure~\ref{fig:teaser}(b) and (c). Note that the above modified minimization model does not involve level sets and thus can be solved efficiently. Further, we can prove that the above relaxed minimization problem can be solved in a binary variable minimization formulation to find a global minimum. The existence of minimizers of the modified energy $\mathcal{G}$ given in Eqn.~\eqref{E:ourmin2}  is proved using the theory of functions of bounded variation (BV) space~\cite{GI84}.
%%%%%%%%%%%%% THEOREM 2 %%%%%%%%%%%%%%%%%%%
\begin{theorem}For a given input gray scale image $I\in L^{\infty}(\Omega)$, there exists a minimizer for the  functional $\mathcal{G}$ in~\eqref{E:ourmin2} in $\mathbb{R}^{4}\times BV_{[0,1]}(\Omega)^{2}.$  
\end{theorem}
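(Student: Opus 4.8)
The plan is to run the direct method of the calculus of variations on $\mathcal{G}(\mathbf{c},\cdot)$ viewed jointly as a function of $(\mathbf{c},\mathbf{u})$. Assuming the standard sign conditions $\mu_1,\mu_2>0$ and $\lambda_{11},\lambda_{10},\lambda_{01},\lambda_{00}\ge 0$, every term of $\mathcal{G}$ is nonnegative, so $\mathcal{G}\ge 0$ and $m:=\inf_{\mathbb{R}^4\times BV_{[0,1]}(\Omega)^2}\mathcal{G}$ is a finite number. Fix a minimizing sequence $(\mathbf{c}^n,\mathbf{u}^n)$ with $\mathcal{G}(\mathbf{c}^n,\mathbf{u}^n)\to m$; in particular $\mathcal{G}(\mathbf{c}^n,\mathbf{u}^n)\le C$ for some constant $C$ and all $n$. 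The goal is to extract a subsequence converging (in appropriate senses) to a pair $(\mathbf{c}^\star,\mathbf{u}^\star)$ on which $\mathcal{G}$ attains $m$.

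First I would compactify the $\mathbf{c}$-variable, which is the only place where coercivity genuinely fails. Since $I\in L^{\infty}(\Omega)$, set $J:=[\operatorname*{ess\,inf}_\Omega I,\operatorname*{ess\,sup}_\Omega I]$. For a.e.\ $\mathbf{x}$ the scalar map $c\mapsto (I(\mathbf{x})-c)^2$ is nonincreasing as $c$ moves from $-\infty$ up to $\operatorname*{ess\,inf}I$ and nondecreasing beyond $\operatorname*{ess\,sup}I$, so replacing any component $c_{ij}$ by its projection onto $J$ can only decrease each of the four fitting integrals and leaves the total-variation terms untouched. Hence without loss of generality $\mathbf{c}^n\in J^4$, a compact subset of $\mathbb{R}^4$, and after passing to a subsequence $\mathbf{c}^n\to\mathbf{c}^\star\in J^4$.

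Next I would treat the $\mathbf{u}$-variable by the usual $BV$ compactness. Because the fitting integrals are nonnegative, $\mu_i\int_\Omega|\nabla u_i^n|\,d\mathbf{x}\le \mathcal{G}(\mathbf{c}^n,\mathbf{u}^n)\le C$ for $i=1,2$, and $0\le u_i^n\le 1$ gives a uniform $L^1(\Omega)$ bound; thus $\{u_i^n\}$ is bounded in $BV(\Omega)$. Assuming $\Omega$ is a bounded Lipschitz domain, the compact embedding $BV(\Omega)\hookrightarrow L^1(\Omega)$ provides a further subsequence with $u_i^n\to u_i^\star$ in $L^1(\Omega)$ and pointwise a.e.; the a.e.\ limit preserves $0\le u_i^\star\le 1$, and lower semicontinuity of total variation under $L^1$ convergence gives $u_i^\star\in BV(\Omega)$ with $\int_\Omega|\nabla u_i^\star|\,d\mathbf{x}\le\liminf_n\int_\Omega|\nabla u_i^n|\,d\mathbf{x}$. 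So $\mathbf{u}^\star\in BV_{[0,1]}(\Omega)^2$. For the fitting terms, along the common subsequence $u_1^nu_2^n\to u_1^\star u_2^\star$ a.e.\ (and likewise for the three other bilinear products in $u_1,u_2$), while $(I-c_{ij}^n)^2\to (I-c_{ij}^\star)^2$ a.e.; every integrand is bounded by the constant $(2\|I\|_{L^\infty})^2$, which is integrable since $|\Omega|<\infty$, so dominated convergence yields convergence of each of the four fitting integrals. Adding the lower-semicontinuity of the two $TV$ terms to the convergence of the four fitting terms gives $\mathcal{G}(\mathbf{c}^\star,\mathbf{u}^\star)\le\liminf_n\mathcal{G}(\mathbf{c}^n,\mathbf{u}^n)=m$, hence $(\mathbf{c}^\star,\mathbf{u}^\star)$ is a minimizer.

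The main obstacle is precisely the non-coercivity of $\mathcal{G}$ in $\mathbf{c}$: when, for instance, $u_1u_2\equiv 0$ the coefficient $c_{11}$ is unconstrained, so one cannot directly extract a convergent subsequence from $\mathbf{c}^n$, and one cannot simply ``minimize out'' $\mathbf{c}$ either because the closed-form expressions for $\mathbf{c}$ degenerate there. The resolution is the a priori truncation onto the compact box $J^4$ described above, after which the rest of the argument is routine. The only hypotheses that must be made explicit are $I\in L^\infty(\Omega)$ (already assumed), $\Omega$ bounded with Lipschitz boundary (needed for the compact $BV\hookrightarrow L^1$ embedding), and $\mu_i>0$, $\lambda_{ij}\ge 0$ (needed for the a priori total-variation bound and for $\mathcal{G}\ge 0$).
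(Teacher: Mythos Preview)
Your argument is correct and follows the same direct-method skeleton as the paper: take a minimizing sequence, extract a convergent subsequence in $\mathbf{u}$ via $BV$ compactness and in $\mathbf{c}$ via a boundedness argument, then pass to the limit using lower semicontinuity of the total variation. The one genuine difference is how the $\mathbf{c}$-variable is compactified. The paper replaces each $\mathbf{c}^k$ by the explicit mean-value minimizers
\[
c_{11}^{k}=\frac{\int_{\Omega} I\,u_{1}^{k}u_{2}^{k}\,d\mathbf{x}}{\int_{\Omega}u_{1}^{k}u_{2}^{k}\,d\mathbf{x}},\qquad \text{etc.},
\]
and then argues that these are uniformly bounded because $I\in L^{\infty}(\Omega)$; it finishes with Fatou's lemma rather than dominated convergence. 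Your truncation onto the box $J^4=[\operatorname*{ess\,inf}I,\operatorname*{ess\,sup}I]^4$ achieves the same bound but sidesteps the degeneracy you correctly identify (the paper's formulas are ill-defined when, say, $\int_\Omega u_1^k u_2^k\,d\mathbf{x}=0$, and the paper does not address this). Your use of dominated convergence on the fitting terms is also slightly sharper than Fatou, since it gives actual convergence of those integrals rather than just a $\liminf$ inequality. Finally, you make explicit the standing hypotheses ($\Omega$ bounded Lipschitz, $\mu_i>0$, $\lambda_{ij}\ge 0$) that the paper leaves implicit; these are indeed needed for the argument to go through.
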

%%%%%%%%%%%%% THEOREM 2 %%%%%%%%%%%%%%%%%%%
\begin{proof}
Let $m:=\inf \mathcal{G}(\mathbf{c},\mathbf{u})$ and $\{(\mathbf{c}^{k},\mathbf{u}^{k})\}_{k=1}^{\infty}\subseteq\mathbb{R}^{4}\times BV_{[0,1]}(\Omega)^{2}$ be a minimizer sequence for the energy $\mathcal{G}$, {\it i.e.}, 
\begin{equation*}\label{Ec3:s3:e3}
\mathcal{G}(\mathbf{c}^{k},\mathbf{u}^{k})\xrightarrow{k\rightarrow{} \infty \;\; }m.
\end{equation*}
Since $\{\mathbf{u}^{k}\}_{k=1}^{\infty}$ is bounded in $BV_{[0,1]}(\Omega)^{2}$, there is a subsequence also denoted by $\{\mathbf{u}^{k}\}_{k=1}^{\infty}$,  strongly convergent  to an
element  $\mathbf{u}^{*} \in L^{1}(\Omega)^{2}$. Furthermore, $\mathbf{u}^{*}\in L_{[0,1]}^{1}(\Omega)^{2}$. Therefore, it follows that $\mathbf{u}^{*}\in BV_{[0,1]}(\Omega)^{2}$ and
\begin{equation}\label{Ec3:s3:e555}
\int_{\Omega}|Du_{i}^{*}|\,\mathrm{d}\mathbf{x}\leq\liminf_{k\rightarrow\infty}\int_{\Omega} |Du_{i}^{k}| \,\mathrm{d}\mathbf{x} \quad(\mbox{with $i=1,2$}).
\end{equation}

Now, considering $\mathcal{G}$ as a function of $\mathbf{c}$, its minimization brings the following two equations,
\[c_{11}^{k}=\frac{\int_{\Omega} I\,u_{1}^{k}\,u_{2}^{k}\,d\mathbf{x}}{\int_{\Omega}u_{1}^{k}\,u_{2}^{k}\,d\mathbf{x}},\quad
c_{10}^{k}=\frac{\int_{\Omega}I\,u_{1}^{k}\,(1-u_{2}^{k})\,d\mathbf{x}}{\int_{\Omega}u_{1}^{k}\,(1-u_{2}^{k})\,d\mathbf{x}}\]
\[c_{01}^{k}=\frac{\int_{\Omega} I\,(1-u_{1}^{k})\,u_{2}^{k}\,d\mathbf{x}}{\int_{\Omega}(1-u_{1}^{k})\,u_{2}^{k}\,d\mathbf{x}},~
c_{00}^{k}=\frac{\int_{\Omega} I\,(1-u_{1}^{k})\,(1-u_{2}^{k})\,d\mathbf{x}}{\int_{\Omega}(1-u_{1}^{k})\,(1-u_{2}^{k})\,d\mathbf{x}}.\] 
Since $I\in L^{\infty}(\Omega)$, it follows $\{\mathbf{c}^{k}\}_{k=1}^{\infty}$ is uniformly bounded. Hence, there is a subsequence also denoted by $\{\mathbf{c}^{k}\}_{k=1}^{\infty}\subset\mathbb{R}^{2n}$ and a constant vector $\mathbf{c}^{*}\in\mathbb{R}^{2n}$ such that 
\begin{equation*}
\mathbf{c}^{k}\xrightarrow{k\rightarrow{} \infty \;\; }\mathbf{c}^{*}.
\end{equation*}
Then, from Fatou's lemma we get  for the suitable sequence $\{(\mathbf{c}^{k},\mathbf{u}_{k})\}_{k=1}^{\infty}$:
\begin{equation*}
\mathcal{G}(\mathbf{c}^{*},\mathbf{u}^{*})\leq\liminf_{k\to\infty}\mathcal{G}(\mathbf{c}^{k},\mathbf{u}_{k})=m,
\end{equation*}
{\it i.e.}, $(\mathbf{c}^{*},\mathbf{u}^{*})$ is a minimizer of the functional $\mathcal{G}$.
\end{proof}
Note that the $\mathbf{c}$ values given in the above theorem are computed in the numerical scheme based on a dual minimization formulation which we describe next.
%%%---------------------------------------------------------------------------------------
%%%%%------------------------------------------
\subsection{Implementation details}\label{ssec:imple}

The four phase convex minimization problem in~\eqref{E:ourmin2} is
solved in an alternating fashion for the image variables $(u_1,u_2)$:
\begin{itemize}
\item First fix $u_{2}$, and solve for $u_1$:
\begin{equation*}
\min\limits_{u_{1}\in [0,1]}\left\{\mathcal{G}_{1}(u_{1})=
\int_{\Omega}|\nabla u_{1}| \,\mathrm{d}\mathbf{x} +
\int_{\Omega}r_{1}(\mathbf{c},u_{2})
u_{1}\,\mathrm{d}\mathbf{x}\right\}.
\end{equation*}

\item Then fix $u_1$, and solve for $u_2$:
\begin{equation*}
\min\limits_{u_{2}\in [0,1]}\left\{\mathcal{G}_{2}(u_{2})=
\int_{\Omega}|\nabla u_{2}| \,\mathrm{d}\mathbf{x} +
\int_{\Omega}r_{2}(\mathbf{c},u_{1})
u_{2}\,\mathrm{d}\mathbf{x}\right\},
\end{equation*}
\end{itemize}
where the image region fitting terms are given by, 
\begin{equation*}
\begin{aligned}
r_{1}(\mathbf{c},u_{2})&=(\lambda_{11}(I-c_{11})^{2}-\lambda_{01}(I-c_{01})^{2})u_{2}\\
&+ (\lambda_{10}(I-c_{10})^{2}-
\lambda_{00}(I-c_{00})^{2})(1-u_{2}),\\
r_{2}(\mathbf{c},u_{1})&=(\lambda_{11}(I-c_{11})^{2}-\lambda_{10}(I-c_{10})^{2})u_{1}\\
&+ (\lambda_{01}(I-c_{01})^{2}-
\lambda_{00}(I-c_{00})^{2})(1-u_{1}).
\end{aligned}
\end{equation*}
To solve the above convex optimization
problems we use the Chambolle's dual formulation~\cite{Ch04,BE07} of the
total variation regularization function which occurs as the first term in the energy functional in Eqn.~\eqref{E:ourmin2}. Thus, the new
unconstrained minimization problems to consider are (for $j=1,2$):
\begin{equation*}
\min_{u_{j},v_{j}}\Bigg\{\int_{\Omega}\,|\nabla u_{j} |\,
d\mathbf{x} + \frac{1}{2\theta_{j}}\|u_{j}-v_{j}\|^{2}_{L^{2}(\Omega)}+\int_{\Omega}(r_{j}(\mathbf{c},u_{i}) v_{j}+
\alpha_{j}\nu(v_{j}))\, d\mathbf{x}\Bigg\},
\end{equation*}
where  $j=1,2$ and $j\neq i$, $\theta_{j}$ is chosen to be small
and $\nu(\xi):=\max\{0,2|\xi - \frac{1}{2}|-1\}$ and
$\alpha_{j}>\frac{1}{2}\|r_{j}\|_{L^{\infty}(\Omega)}$. We solve
the above by further splitting into two sub-problems:
\begin{enumerate}
\item Solve for $u_j$:
\begin{equation*}
\min_{u_j}\left\{\int_{\Omega}\,|\nabla u_{j} |\, d\mathbf{x}+
\frac{1}{2\theta_{j}}\|u_{j}-v_{j}\|^{2}_{L^{2}(\Omega)}\right\}
\end{equation*}
The solution is given by \[u_{j} = v_{j} - \theta_{j} div\,
\mathbf{p}_{j}.\] The vector
$\mathbf{p}_{j}=(p_{j_{1}},p_{j_{2}})$ satisfy the equation
\[\nabla(\theta_{j} div\,\mathbf{p}_{j}  -
v_{j})-|\nabla(\theta_{j} div\, \mathbf{p}_{j}- v_{j})
|\mathbf{p}_{j}=0\] and it is solve by a fixed point method:
$\mathbf{p}_{j}^{0} = 0$ and \[\mathbf{p}_{j}^{n+1} =
\frac{\mathbf{p}_{j}^{n}+\delta t \nabla(
div(\mathbf{p}_{j}^{n}) - v_{j}/\theta_{j}) }{1+\delta
t|\nabla(div(\mathbf{p}_{j}^{n}) -
v_{j}/\theta_{j})|}.\]

\item Solve for the auxiliary variable $v_{j}$:
\begin{equation*}
\min_{v_j}\Bigg\{\frac{1}{2\theta_{j}}\|u_{j}-v_{j}\|^{2}_{L^{2}(\Omega)}+\int_{\Omega}(r_{j}(\mathbf{c},u_{i})
v_{j}+ \alpha_{j}\nu(v_{j})) d\mathbf{x}\Bigg\},
\end{equation*}
for which the solution is given by:
\[v_{j}=\min\big\{\max\left(u_{j}(\mathbf{x})-\theta_{j}r_{j}(\mathbf{c},u_{i}),0\right),1\big\}.\]
\end{enumerate}
Furthermore, at every few iterations the vector $\mathbf{c}$ is updated according to the following equations:
\[c_{11}=\frac{\int_{\Omega} I\,u_{1}\,u_{2}\,d\mathbf{x}}{\int_{\Omega}u_{1}\,u_{2}\,d\mathbf{x}},\quad
c_{10}=\frac{\int_{\Omega}I\,u_{1}\,(1-u_{2})\,d\mathbf{x}}{\int_{\Omega}u_{1}\,(1-u_{2})\,d\mathbf{x}}\]
\[c_{01}=\frac{\int_{\Omega} I\,(1-u_{1})\,u_{2}\,d\mathbf{x}}{\int_{\Omega}(1-u_{1})\,u_{2}\,d\mathbf{x}},~
c_{00}=\frac{\int_{\Omega} I\,(1-u_{1})\,(1-u_{2})\,d\mathbf{x}}{\int_{\Omega}(1-u_{1})\,(1-u_{2})\,d\mathbf{x}}.\] 
The computation of $\mathbf{c}$ values are similar to the ones in Vese and Chan model~\cite{VC02} (see Eqn.~\ref{E:origmin}) except that they are now based on the binary partitioning functions and does not involve computing regularized Heaviside functions.  We refer to~\cite{Ch04} for more details on this particular form of dual minimization and the motivation for the fixed point method used to derive the solution for the auxiliary variable in the second step.
%%%---------------------------------------------------------------------------------------
%%%---------------------------------------------------------------------------------------
\section{Experimental results}\label{sec:exper}
%%%%--------------- FIGURE 2 --------------------------------------------------------
\begin{figure}
\centering
\subfigure[]{\includegraphics[width=2.2cm,height=2.4cm]{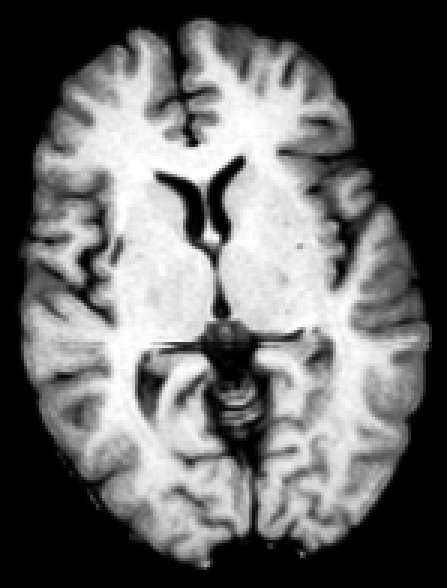}}
\subfigure[]{\includegraphics[width=2.2cm,height=2.4cm]{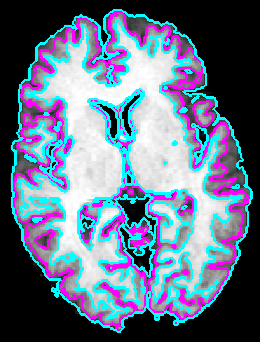}}
\subfigure[]{\includegraphics[width=2.2cm,height=2.4cm]{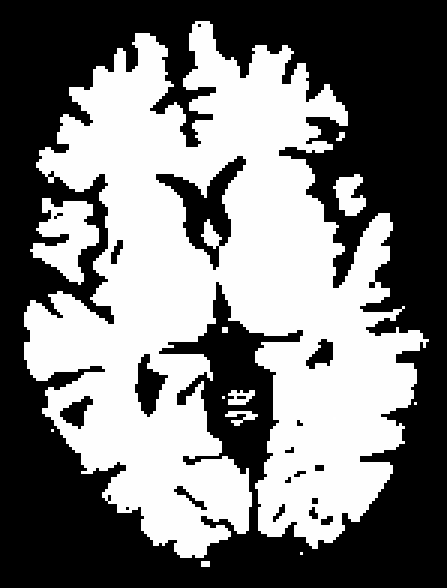}}
\subfigure[]{\includegraphics[width=2.2cm,height=2.4cm]{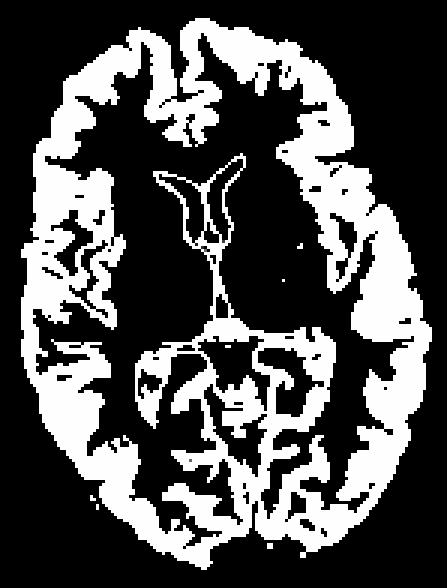}}
\subfigure[]{\includegraphics[width=2.2cm,height=2.4cm]{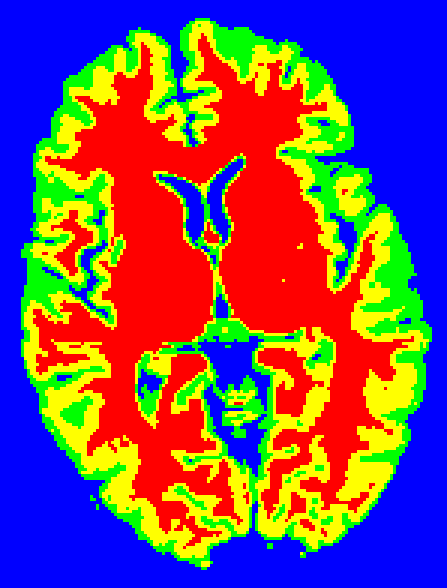}}\\
\subfigure[]{\includegraphics[width=4.5cm,height=3.1cm]{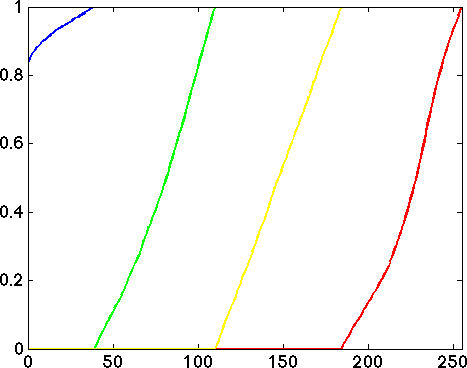}}
\subfigure[]{\includegraphics[width=4.5cm,height=3.1cm]{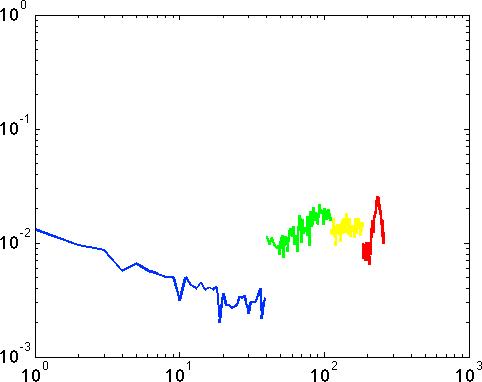}}\\
\caption{\footnotesize{Our fast four phase image segmentation model provides good segmentation results by distinguishes the gray matter from the surrounding white.
            First row:
            (a) Input image.
            (b) Segmentation result with $\lambda=1$.
            (c) Final binary segmentation $u_1$.
            (d) Final binary segmentation $u_2$.
            (e) Color visualization of the segmentation result.
            Second row:
            (g) Cumulative distribution function (CDF) of the four regions from (e).
            (g) Histogram of the four regions showing the separation clearly.}}\label{fig:our}
\end{figure}
%%%%--------------- FIGURE 3 --------------------------------------------------------
\begin{figure}
\centering
\[\begin{array}{cccc}
\includegraphics[width=2.0cm,height=2.2cm]{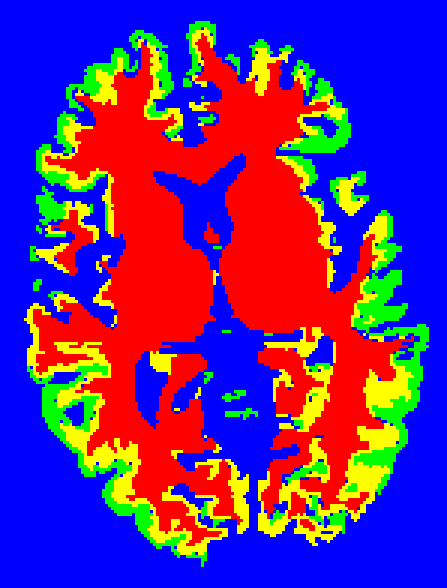}	&
\includegraphics[width=2.0cm,height=2.2cm]{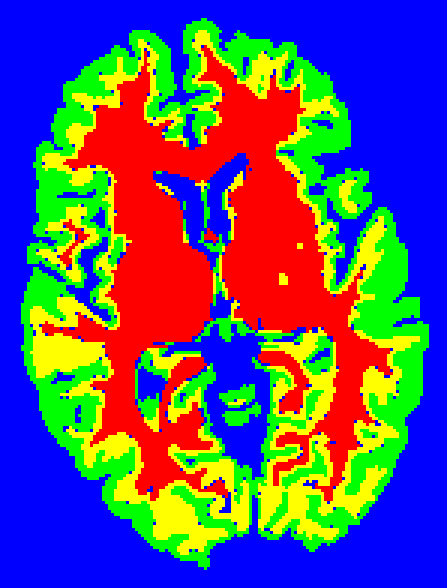}&
\includegraphics[width=2.0cm,height=2.2cm]{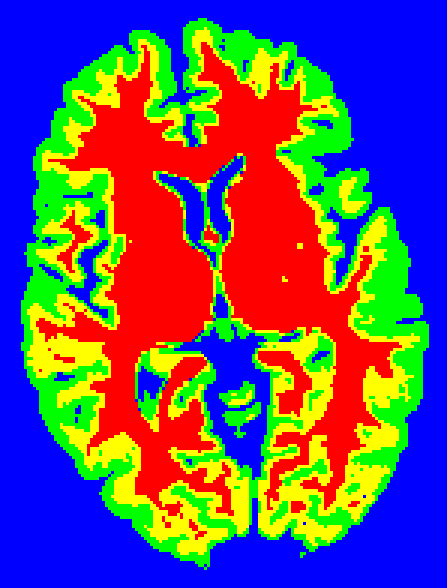}&
\includegraphics[width=2.0cm,height=2.2cm]{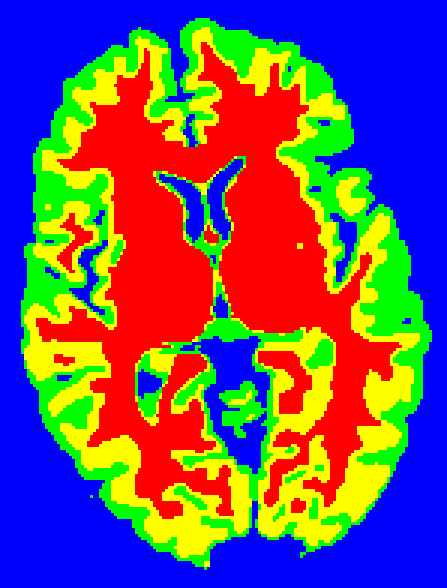}\\
\includegraphics[width=2.85cm,height=2.25cm]{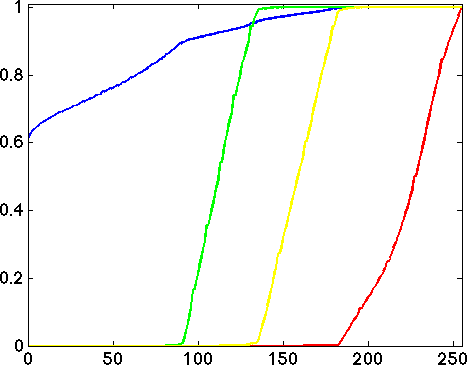}&
\includegraphics[width=2.85cm,height=2.25cm]{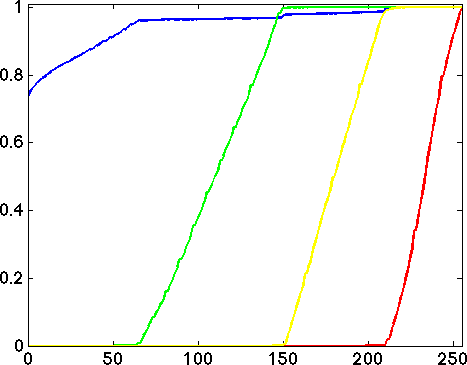}&
\includegraphics[width=2.85cm,height=2.25cm]{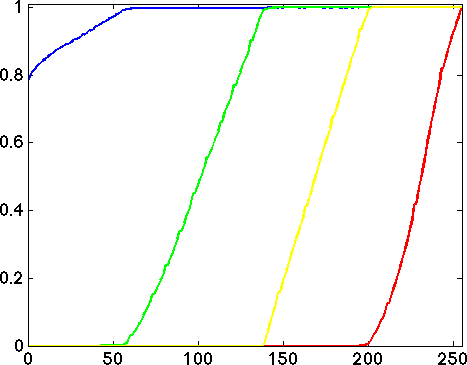}&
\includegraphics[width=2.85cm,height=2.25cm]{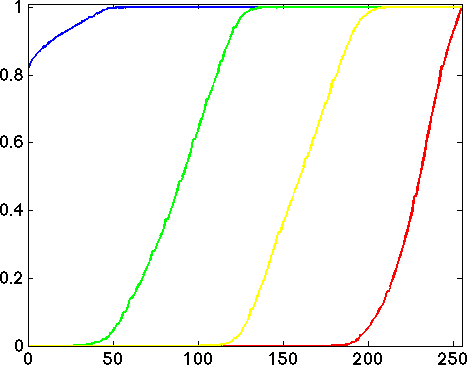}\\
\includegraphics[width=2.85cm,height=2.25cm]{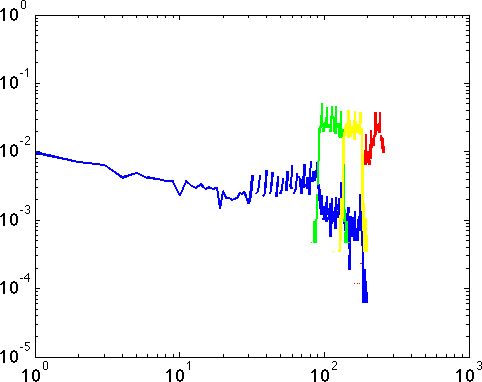}&
\includegraphics[width=2.85cm,height=2.25cm]{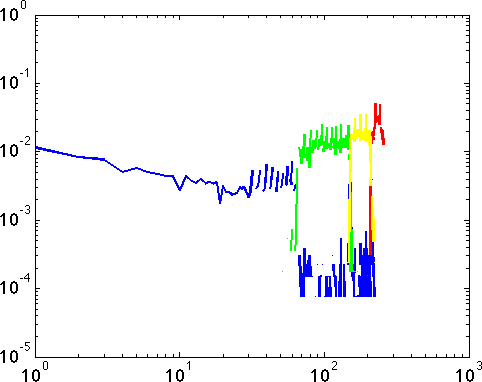}&
\includegraphics[width=2.85cm,height=2.25cm]{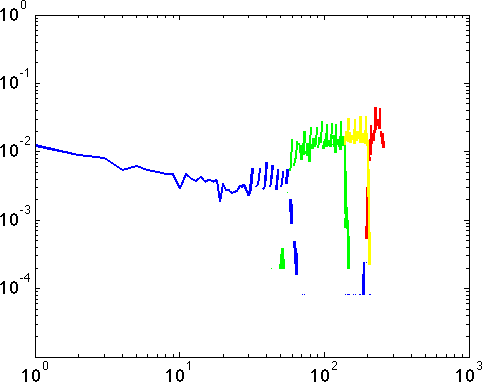}&
\includegraphics[width=2.85cm,height=2.25cm]{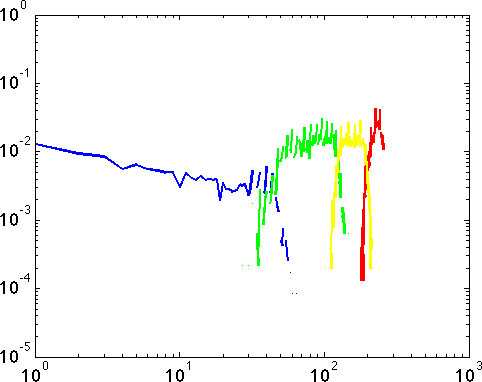}\\
\scriptsize{\textit{Ker}} & \scriptsize{\textit{Mean}} & \scriptsize{\textit{Clust}} & \scriptsize{\textit{Primal-Dual}}\\
\end{array}\]
\caption{\footnotesize{Comparison with \textit{Ker, Mean, Cluster, Mean, Primal-Dual} multiphase segmentation methods. 
First row: Color coded visualization of the obtained segmentation result.
Second row: Cumulative distribution function (CDF) of the four computed regions.
Third row: Histogram of the four regions showing the intersections.}}\label{fig:comp}
            \vspace{-0.4cm}
\end{figure}
%%%%%--------------- FIGURE --------------------------------------------------------

We have used full brain MRI images available at the whole brain atlas~\footnote{\url{http://www.med.harvard.edu/aanlib/home.html}}. The parameters $\theta_1=\theta_2 = 0.001$ were fixed for the segmentation results reported here. In order to simplify notations we use $\lambda=\lambda_{11}=\lambda_{01}=\lambda_{10}=\lambda_{00}$ and we fix $\lambda=1$ in all our experiments as well. Equal weights are used for the four regions to be segmented as we do not want to introduce bias for certain phases. The images presented here are from $T1$ MRI imaging modality with slice thickness of $1$$mm$. Our scheme takes less than $0.2$ seconds (for $100$ iterations) on MATLAB2012a on a Mac laptop with Intel Core i7 CPU $2.3$GHz, $8$GB RAM CPU. Meanwhile, the average computation time for related models compared from the literature are in the region of $30$ seconds (for $100$ iterations) to converge to the final segmentation. 

Figure~\ref{fig:our} shows another example segmentation result of our globally convex four phase scheme. The noise (calculated relative to the brightest tissue, and denoted  by "$n$") is set to $3\%$ with intensity non-uniformity (denoted by "$RF$") is of strength $20\%$. The result of our four phase model is displayed in Figure~\ref{fig:our}(b) with two contours (Purple, Light-Blue) overlaid on top of the input image. Figure~\ref{fig:our}(c) and (d) show the two functions $u_1,u_2$ computed using our scheme and thresholded at $0.5$. The function $u_1$ captures the background shape~\ref{fig:our}(a) (corresponding to level set $\phi_1$) whereas function $u_2$ in Figure~\ref{fig:our}(b) (corresponding to level set $\phi_2$) contains the white matter. Figure~\ref{fig:our}(e) we use four different colors (Blue, Green, Yellow, and Maroon) to highlight different phases for better visualization of phase separation and boundary detection of regions. In Figure~\ref{fig:our}(f) and (g), we show cumulative distribution function (CDF) and the histogram of each of the four regions computed by the proposed method. The histograms highlight separation of different phases/regions indicating the superior performance of our splitting based numerical approach.  

Figure~\ref{fig:comp} shows a comparison result with other multiphase active contour methods from~\cite{AyedMitiche06,AyedMiticheBelhadj06,BenSalahMiticheTIP10,ChambollePock11} called in short, Ker, Mean, Clust and Primal-Dual respectively, for the same image in Figure~\ref{fig:our}(a). Note that to make a fair comparison with other models we used the same noise level and intensity non-uniformity for this example image.
In Figure~\ref{fig:comp} bottom two rows we show the cumulative distribution function (CDF) and histograms computed for each of the computed phases respectively. Compared with the histograms shown in Figure~\ref{fig:our}(f) and (g) for our scheme we see that proposed model provides better separation of regions. The histograms for the other schemes in Figure~\ref{fig:comp}(last row) show nontrivial intersections, highlighting the drawback in using level set based implementations. Moreover, the noise remains as speckles in the segmented regions whereas our model handles it efficiently. 

\subsection{Error metrics computation}

%%%%--------------- FIGURE 4 --------------------------------------------------------
\begin{figure*}
\centering
\[\begin{array}{cc}
\includegraphics[width=.57in,height=.57in]{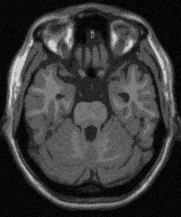}
\includegraphics[width=.57in,height=.57in]{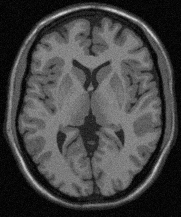}
\includegraphics[width=.57in,height=.57in]{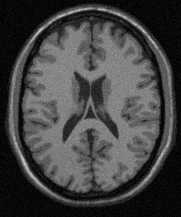}
\includegraphics[width=.57in,height=.57in]{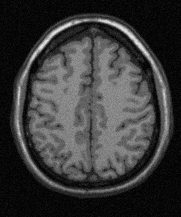}&
\includegraphics[width=.57in,height=.57in]{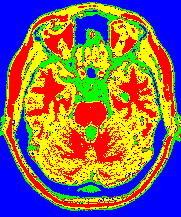}
\includegraphics[width=.57in,height=.57in]{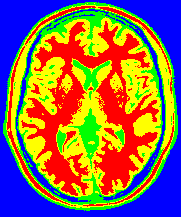}
\includegraphics[width=.57in,height=.57in]{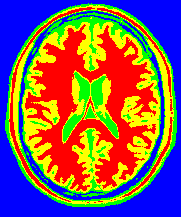}
\includegraphics[width=.57in,height=.57in]{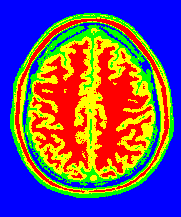}\\
\mbox{\scriptsize{Brain MRI images}}  & \mbox{\scriptsize{$n=3$, $RF=0$}} 
\end{array}\]
\[\begin{array}{cc}
\includegraphics[width=.57in,height=.57in]{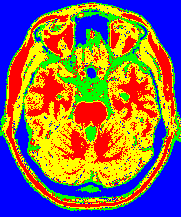}
\includegraphics[width=.57in,height=.57in]{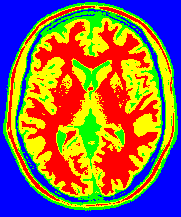}
\includegraphics[width=.57in,height=.57in]{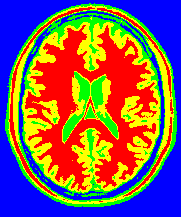}
\includegraphics[width=.57in,height=.57in]{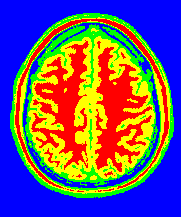}&
\includegraphics[width=.57in,height=.57in]{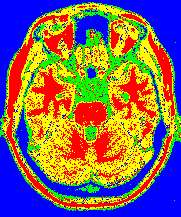}
\includegraphics[width=.57in,height=.57in]{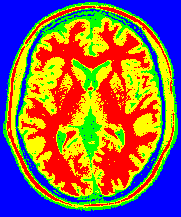}
\includegraphics[width=.57in,height=.57in]{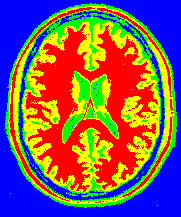}
\includegraphics[width=.57in,height=.57in]{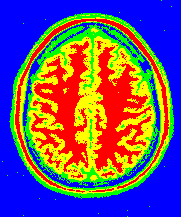}\\
\mbox{\scriptsize{$n=3$, $RF=20$}}  & \mbox{\scriptsize{$n=5$, $RF=0$}} 
\end{array}\]
\[\begin{array}{cc}
\includegraphics[width=.57in,height=.57in]{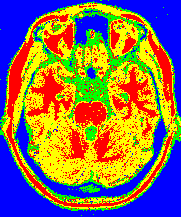}
\includegraphics[width=.57in,height=.57in]{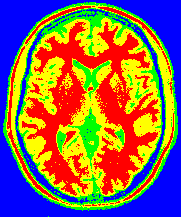}
\includegraphics[width=.57in,height=.57in]{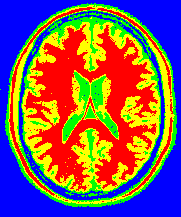}
\includegraphics[width=.57in,height=.57in]{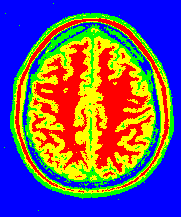}&
\includegraphics[width=.57in,height=.57in]{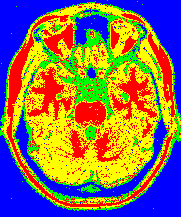}
\includegraphics[width=.57in,height=.57in]{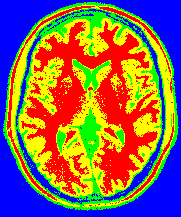}
\includegraphics[width=.57in,height=.57in]{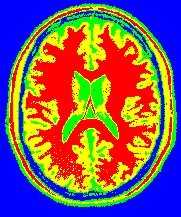}
\includegraphics[width=.57in,height=.57in]{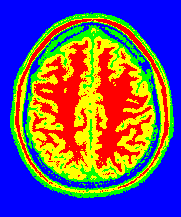}\\
 \mbox{\scriptsize{$n=5$, $RF=20$}}  & \mbox{\scriptsize{$n=5$, $RF=40$}} 
\end{array}\]
\caption{\footnotesize {Segmentation results for full Brain data-sets with representative axial slices. First (top-left) subfigure shows the noise-free brain MRI images. Next subfigures present segmentation results for different noise (``$n$'') and  non-uniformity (``$RF$") values for our scheme. Segmentation results are stable for increasing values of noise and intensity inhomegenities.}}\label{fig:slices}
\end{figure*}
%%%%---------------------------------------------------------------------------------------
We use the following quantitative error metrics to compare the schemes with gold standard ground truth segmentations. For more details about objective evaluation of image segmentation algorithms and for precise definitions of these metrics we refer to~\cite{UnniPAMI2007}.

\begin{itemize}
\item \textbf{DICE}:\\
The Dice coefficient~\cite{Dice45} is a popular error metric and is used to compare ground truth segmentation with those obtained with automatic multiphase segmentation schemes. By definition, for two binary segmentations $A$ and $B$, the Dice coefficient is computed as:
\begin{eqnarray}\label{E:dice}
D(A,B) = \frac{2\abs{A\cap B}}{\abs{A} + \abs{B}}.
\end{eqnarray}
Here the binary segmentation is computed automatically, using the segmentation curves and by thresholding regions obtained by all algorithms. The notation $\abs{A}$ denotes the number of pixels in the set $A$. Note that, a $D$ value of $1$ indicates perfect agreement. In particular, higher numbers indicate that the results of that particular scheme's result match the gold standard better than results that produce lower Dice coefficients. 

\item \textbf{RI}:\\
Rand Index: A metric based on a classical nonparametric test and is computed by counting pairs of pixels
that have compatible label relationships in the two segmentations to be compared.

\item \textbf{GCE}:\\
Global Consistency Error: A metric which computs the degree of overlap of the cluster associated with each pixel in one segmentation and its ÒclosestÓ approximation in the other segmentation. Values to closer to $0$ indicate better segmentation results.

\item \textbf{VI}:\\       
Variation of Information: A metric related to the conditional entropies between the class label distribution of the segmentations. This computes  a measure of information content in each of the segmentations and how much information one segmentation gives about the other. Values closer to $1$ indicate better segmentation results.

\end{itemize}
Note that all these metrics are for comparing two segmentations, one of which is assumed to be the available ground truth.
Table~\ref{tab:dice} shows the comparison of average Dice values (for $181$ images) of different models for different noise and intensity inhomogenieties taken from Brainweb database. As can be seen our scheme performs better in terms of the Dice coefficient compared with other related approaches. Similarly in Table~\ref{tab:measures} we see that the average RI, GCE and VI for different schemes against our model shows that the proposed globally convex multiphase scheme performs well overall.

Figure~\ref{fig:slices} shows representative segmentation results for full Brain data-sets (axial slices are shown) with different noise (``$n$'') and  non-uniformity (``$RF$") levels for our scheme. Different $n$ and $RF$ are specified in  Figure~\ref{fig:slices} for each row. This illustrates that our scheme preserves the topological changes as we move through the image stack. Moreover, our scheme can handle noise and intensity non-uniformity together effectively. Finally, in Figure~\ref{fig:slicecompar} we show different segmentation results for a particular image (slice number 79) taken across all noise and inhomogeniety levels for different schemes. The results indicate that Ker and Mean methods can lead to poor separation of different regions whereas noise can affect the result of Clust and Primal-Dual  schemes. Meanwhile, our  approach performs well and handles higher non-uniformity without degrading the final segmentation results. Further data-sets and extensive comparison results of all the schemes for full brain stacks are available online~\footnote{\url{http://dx.doi.org/10.6084/m9.figshare.781297}}. 

%%%%--------------- FIGURE 5 --------------------------------------------------------
\begin{figure*}
\centering
\[\begin{array}{cc}
\includegraphics[width=1.21cm,height=1.46cm]{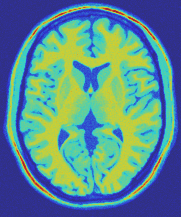}
\includegraphics[width=1.21cm,height=1.46cm]{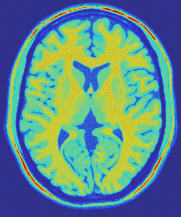}
\includegraphics[width=1.21cm,height=1.46cm]{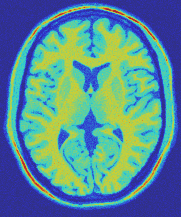}
\includegraphics[width=1.21cm,height=1.46cm]{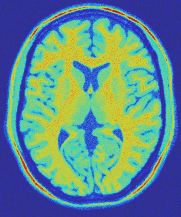}
\includegraphics[width=1.21cm,height=1.46cm]{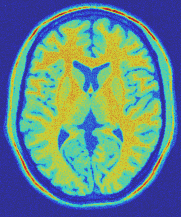}
&
\includegraphics[width=1.21cm,height=1.46cm]{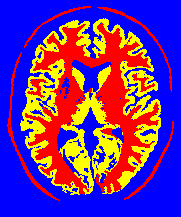}
\includegraphics[width=1.21cm,height=1.46cm]{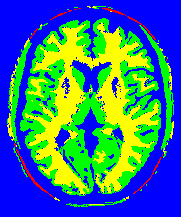}
\includegraphics[width=1.21cm,height=1.46cm]{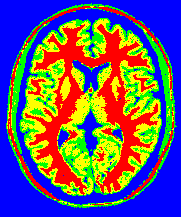}
\includegraphics[width=1.21cm,height=1.46cm]{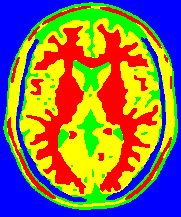}
\includegraphics[width=1.21cm,height=1.46cm]{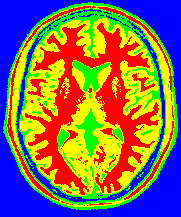}\\
\mbox{\scriptsize{Brain MRI image with different ($n$, $RF$) values}} & \mbox{\scriptsize{$n=3$, $RF=0$}}\\
\end{array}\]
\[\begin{array}{cc}
\includegraphics[width=1.21cm,height=1.46cm]{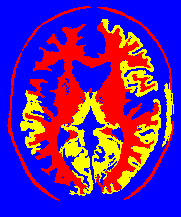}
\includegraphics[width=1.21cm,height=1.46cm]{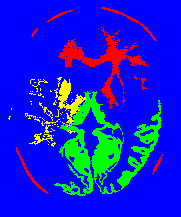}
\includegraphics[width=1.21cm,height=1.46cm]{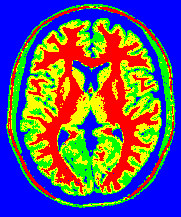}
\includegraphics[width=1.21cm,height=1.46cm]{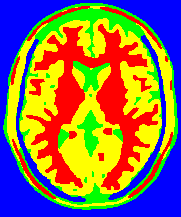}
\includegraphics[width=1.21cm,height=1.46cm]{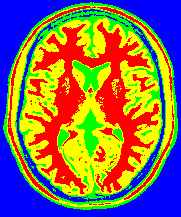}&
\includegraphics[width=1.21cm,height=1.46cm]{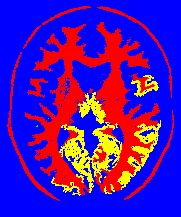}
\includegraphics[width=1.21cm,height=1.46cm]{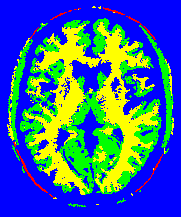}
\includegraphics[width=1.21cm,height=1.46cm]{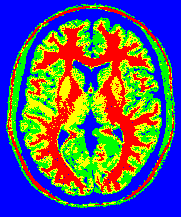}
\includegraphics[width=1.21cm,height=1.46cm]{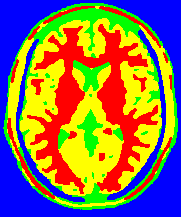}
\includegraphics[width=1.21cm,height=1.46cm]{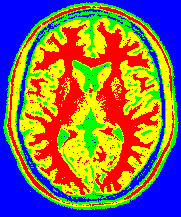}\\
\mbox{\scriptsize{$n=3$, $RF=20$}} & \mbox{\scriptsize{$n=5$, $RF=0$}}\\
\end{array}\]
\[\begin{array}{cc}
\includegraphics[width=1.21cm,height=1.46cm]{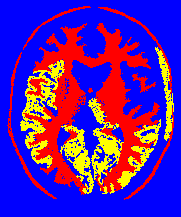}
\includegraphics[width=1.21cm,height=1.46cm]{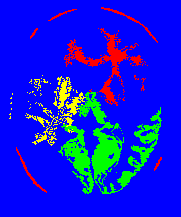}
\includegraphics[width=1.21cm,height=1.46cm]{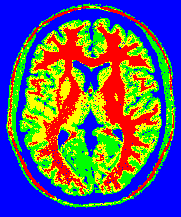}
\includegraphics[width=1.21cm,height=1.46cm]{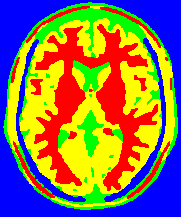}
\includegraphics[width=1.21cm,height=1.46cm]{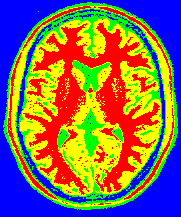}&
\includegraphics[width=1.21cm,height=1.46cm]{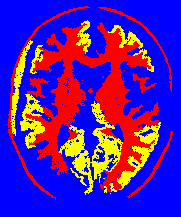}
\includegraphics[width=1.21cm,height=1.46cm]{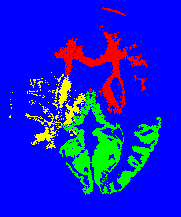}
\includegraphics[width=1.21cm,height=1.46cm]{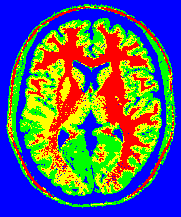}
\includegraphics[width=1.21cm,height=1.46cm]{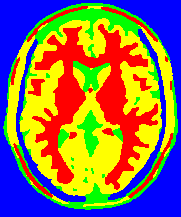}
\includegraphics[width=1.21cm,height=1.46cm]{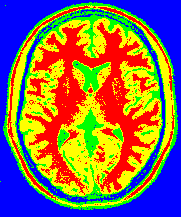}\\
\mbox{\scriptsize{$n=5$, $RF=20$}} & \mbox{\scriptsize{$n=5$, $RF=40$}}\\
\end{array}\]
\caption{\footnotesize{Comparison of color segmentation visualization for a single brain MRI image (Slice number $79$) with different ($n$, $RF$) levels. The top-left subfigure is different input images. Remaining subfigures contain different segmentation results. From left to right: Ker, Mean, Clust, Primal-dual, and Our approach, respectively.}}\label{fig:slicecompar}

\end{figure*}
\begin{table*}
\centering
\caption{\footnotesize{Average Dice coefficients values for different schemes in four different phases. Values near $1$ indicate the closeness of the segmentation to the ground truth segmentation. Best results are indicated by boldface.}}\label{tab:dice}
\footnotesize{
\begin{tabular}{ccccccccc}
    \hline
 $n$ & $RF$ & Regions & \textit{Ker} & \textit{Mean} & \textit{Clust} &  \textit{Primal-Dual}& \textit{Our}\\
\hline
  \multirow{4}{*} {$3$} &  \multirow{4}{*} {$0$} & 
        D1 & 0.305670 & 0.824283 & 0.886665 & 0.698128 &\textbf{0.944007}\\
 & & D2 & 0.224586 & 0.581326 & 0.419365 & 0.700223 &\textbf{0.915818}\\
 & & D3 & 0.131252 & 0.363182 & 0.110626 & 0.718244 & \textbf{0.870375}\\
 & & D4 & 0.565510 & 0.840712 & 0.693652 & 0.955584 & \textbf{0.965933}\\
\hline
 \multirow{4}{*} {$3$} &  \multirow{4}{*} {$20$} & 
           D1 & 0.306836 & 0.767452 & 0.873386 & 0.692621 &\textbf{0.931111}\\
 & &    D2 & 0.223917 & 0.534288 & 0.432176 & 0.695534 &\textbf{0.907063}\\
 & &    D3 & 0.130171 & 0.303436 & 0.110716 & 0.718225 & \textbf{0.873175}\\
 & &    D4 & 0.563268 & 0.823805 & 0.645526 & 0.953754 & \textbf{0.967594}\\
 \hline
 \multirow{4}{*} {$5$} &  \multirow{4}{*} {$0$} & 
           D1 &  0.305627 & 0.788663 & 0.877971 & 0.688774 &\textbf{0.912402}\\
 & &    D2 &  0.226171 & 0.539260 & 0.311053 & 0.680036 &\textbf{0.879323}\\
 & &    D3 &  0.126738 & 0.299720 & 0.100588 & 0.688052 & \textbf{0.829607}\\
 & &    D4 &  0.544178 & 0.807202 & 0.669333 & 0.948687 & \textbf{0.954868}\\

\hline
 \multirow{4}{*} {$5$} &  \multirow{4}{*} {$20$} & 
           D1 & 0.309830 & 0.746360 & 0.866684 & 0.708890 &\textbf{0.903028}\\
 & &    D2 & 0.225578 & 0.510120 & 0.318350 & 0.685553 &\textbf{0.870886}\\
 & &    D3 & 0.132034 & 0.253848 & 0.111433 & 0.683057 & \textbf{0.824065}\\
 & &    D4 & 0.539861 & 0.780266 & 0.613947 & 0.944511  & \textbf{0.953657}\\
\hline
 \multirow{4}{*} {$5$} &  \multirow{4}{*} {$40$} & 
           D1 & 0.310085 & 0.715360 & 0.825330 & 0.685356 &\textbf{0.872111}\\
 & &    D2 & 0.226430 & 0.478908 & 0.286386 & 0.678544 &\textbf{0.844355}\\
 & &    D3 & 0.130849 & 0.221841 & 0.127278 & 0.670710 &\textbf{0.806790}\\
 & &    D4 & 0.542354 & 0.766757 & 0.586870 & 0.543010 &\textbf{0.951143}\\
\hline
\end{tabular}
}
\end{table*}
\begin{table*}
\centering
\caption{\footnotesize{Average Rand Index (RI), Global Consistency Error (GCE) and Variation of Information (VI) for for different schemes. Best results are indicated by boldface.}}\label{tab:measures}
\footnotesize{
\begin{tabular}{ccccccccc}
    \hline
 $n$ & $RF$ & Error Metrics & \textit{Ker} & \textit{Mean} & \textit{Clust} & \textit{Primal-Dual} & \textit{Our}\\
\hline
  \multirow{3}{*} {$3$} &  \multirow{3}{*} {$0$} & 
              RI & 0.527025 & 0.849013 & 0.672026 & 0.895372 &\textbf{0.946341}\\
 & &  GCE & 0.332322 & 0.223942 & 0.158767 & 0.173467 &\textbf{0.085668}\\
 & &       VI & 2.483764 & 1.177137 & 1.490298 & 0.992642 & \textbf{0.569099}\\
 
\hline
  \multirow{3}{*} {$3$} &  \multirow{3}{*} {$20$} & 
             RI & 0.525251 & 0.833570 & 0.659371 & 0.887008 &\textbf{0.941475}\\
 & & GCE & 0.329012 & 0.236016 & 0.171285 & 0.189292 &\textbf{0.096797}\\
 & &      VI & 2.477927 & 1.255048 & 1.564992 & 1.065011 & \textbf{0.620833}\\

  \hline
 \multirow{3}{*} {$5$} &  \multirow{3}{*} {$0$} & 
             RI & 0.523815  & 0.825506 & 0.648725 & 0.884315 &\textbf{0.921506}\\
 & & GCE & 0.333999  & 0.250001& 0.154166  & 0.194308 &\textbf{0.135605}\\
 & &      VI & 2.507981& 1.326232 & 1.563033   & 1.087131 & \textbf{0.828043}\\

\hline
 \multirow{3}{*} {$5$} &  \multirow{3}{*} {$20$} & 
           RI  & 0.522536 & 0.813937 & 0.629603 & 0.877183 &\textbf{0.917244}\\
 & & GCE& 0.330010 & 0.252157 & 0.166120 & 0.209219 &\textbf{0.142290}\\
 & &     VI & 2.495524 & 1.375133 & 1.642941 & 1.148695 & \textbf{0.858660}\\

\hline
 \multirow{3}{*} {$5$} &  \multirow{3}{*} {$40$} & 
            RI & 0.520192 & 0.805794 & 0.604564 & 0.865636 &\textbf{0.905678}\\
 & &GCE & 0.326658 & 0.256492 & 0.176804 & 0.231084 &\textbf{0.163501}\\
 & &     VI & 2.489061 & 1.423080 & 1.737919 & 1.240130 &\textbf{0.950309}\\
 
\hline
\end{tabular}
}
\end{table*}
%%%---------------------------------------------------------------------------------------
%%%---------------------------------------------------------------------------------------
\section{Conclusion}\label{sec:concl}

We study a fast globally convex four phase active contour scheme for MRI image segmentation and provide a well posed convex energy minimization which can be used to determine piecewise constant segmentation without level sets. By using a dual minimization based implementation our approach provides better phase differentiation than other schemes. Experimental results on brain MRI images indicate the proposed approach provides better results compared with other active contour based multiphase segmentation schemes. 

Vector valued version similar to~\cite{CS00} is straightforward and our current implementation can handle RGB color images as well. Currently we are developing a three dimensional version for obtaining surface segmentations from MRI images similar to~\cite{DrapacaMultiphaseBrainMRI05} as well as a method to extract intensity non-uniformity patterns coupled with segmentations~\cite{Zhuge20091095}.
%%%---------------------------------------------------------------------------------------
\bibliographystyle{plain}
\bibliography{endosrefs,mybrainrefs}
%%%%---------------------------------------------------------------------------
%%%%--------- That's all folks ;-) --------------------------------------------
\end{document}